\definecolor{newcolor}{rgb}{.8,.349,.1}
\newtheorem{definition}{Definition}
\newtheorem{proposition}{Proposition}
\DeclareMathOperator*{\maximize}{maximize}
\DeclareMathOperator*{\minimize}{minimize}
\newcommand{\fun}[1]{{\mathrm{#1}}}
\newcommand{\pp}[1]{{\left( #1 \right)}}
\newcommand{\ebr}[1]{{\left[ #1 \right]}}
\newcommand{\norm}[1]{{ \Vert #1 \Vert }}
\newcommand{\snorm}[1]{{ \Vert #1 \Vert^2 }}
\newcommand{\abs}[1]{{ \left| #1 \right| }}
\newcommand{\T}{\top}
\newcommand{\mat}[1]{\boldsymbol{#1}}
\newcommand{\prob}[1]{\mathrm{Pr}\left[#1\right]}
\newcommand{\f}{f_\phi}
\newcommand{\h}{h_\phi}
\title{Robust Explanations for Private Support Vector Machines}
\author{
    Rami Mochaourab\footnote{Digital Systems Division, RISE Research Institutes of Sweden, \texttt{rami.mochaourab@ri.se}} \and Sugandh Sinha\footnote{Digital Systems Division, RISE Research Institutes of Sweden, \texttt{sugandh.sinha@ri.se}} \and Stanley Greenstein\footnote{Dept. of Law, Stockholm University, \texttt{stanley.greenstein@juridicum.su.se}} \and Panagiotis Papapetrou\footnote{Dept. of Computer and Systems Sciences, Stockholm University, \texttt{panagiotis@dsv.su.se}}
    }
\date{}
\begin{document}
\maketitle

\begin{abstract}
We consider counterfactual explanations for private support vector machines (SVM), where the privacy mechanism that publicly releases the classifier guarantees differential privacy. While privacy preservation is essential when dealing with sensitive data, there is a consequent degradation in the classification accuracy due to the introduced perturbations in the classifier weights. For such classifiers, counterfactual explanations need to be robust against the uncertainties in the SVM weights in order to ensure, with high confidence, that the classification of the data instance to be explained is different than its explanation. We model the uncertainties in the SVM weights through a random vector, and formulate the explanation problem as an optimization problem with probabilistic constraint. Subsequently, we characterize the problem’s deterministic equivalent and study its solution. For linear SVMs, the problem is a convex second-order cone program. For non-linear SVMs, the problem is non-convex. Thus, we propose a sub-optimal solution that is based on the bisection method. The results show that, contrary to non-robust explanations, the quality of explanations from the robust solution degrades with increasing privacy in order to guarantee a prespecified confidence level for correct classifications.
\end{abstract}

\section{Introduction}%
Despite their efficiency in solving complex problems, machine learning (ML) algorithms and models are seldom value-neutral to the extent that they include social and ethical values. Even when such values are integrated into the models they may be mandated by regulatory frameworks, such as traditional laws or policy documents. This paper aims to illustrate the relational nexus between social and ethical values in a technical context. This is done by focusing on three values advocated by the General Data Protection Regulation (GDPR) \cite{RegulationEU}, namely, \emph{explainability}, \emph{privacy}, and \emph{accuracy}.\footnote{References to these social values can be found in Recital 71, Article 25 and Article 5(1)(d) as expanded by the Article 29 Data Protection Working Party Adopted on 3 October 2017 respectively. It is also noteworthy that an in-depth discussion of what exactly these social values entail is beyond the confines of this paper.} What becomes apparent when attempting to transform the above social and ethical values from the natural language of the law into the mathematical language of ML algorithms is that this may be challenging and even technically unattainable. The above social and ethical values have been chosen as their transformation into ML rules clearly illuminates the challenge of aligning these competing social and ethical values promoted by the law into a technical format, a conclusion being that the simultaneous promotion of all these three values is potentially mathematically unattainable. 

\figurename~\ref{fig:outline} gives an overview on how the three mentioned social values are related within this work:
\begin{enumerate}
    \item Accuracy is targeted when learning an SVM classifier through the data from a dataset.
    \item Privacy is guaranteed by applying a privacy preserving mechanism. We will use the mechanism developed in \cite{Rubinstein2012} which guarantees a strong notion of privacy called \emph{differential privacy} \cite{Dwork2014}.
    \item Explainability of predictions for specific data instances enhances the transparency and the trust in the classifier predictions. We will consider \emph{counterfactual explanations} \cite{Wachter2017, molnar2019}, which is a class of explainability methods that quantify the necessary changes to a considered data instance in order to change its classification. Such explainability methods fall under the category of \emph{post hoc} interpretations \cite{Murdoch2019, Arrieta2020}, since they do not require model retraining.
\end{enumerate}

\begin{figure}[!t]
    \centering
    \includegraphics[width=.8\linewidth]{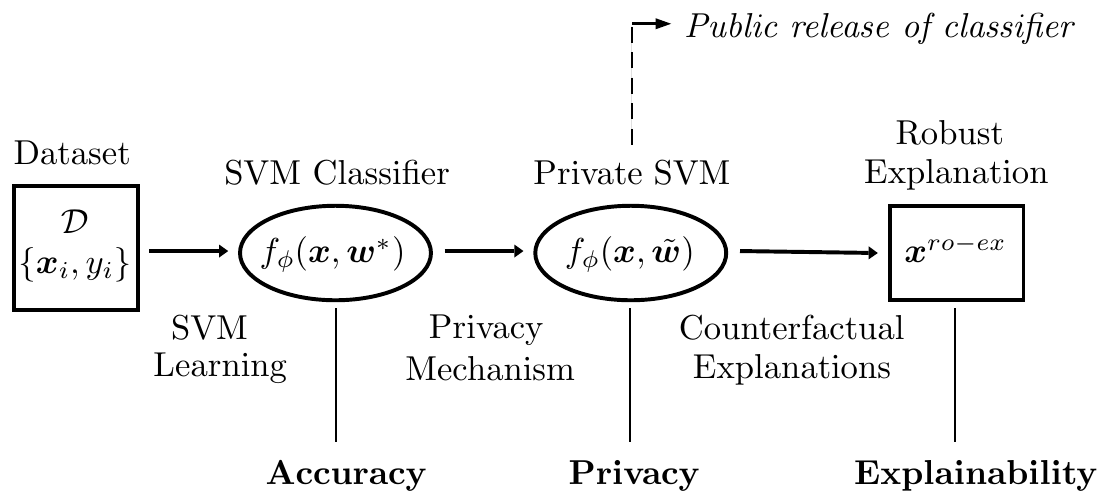}
    \caption{Illustration of the relationship between accuracy, privacy, and explainability associated with an SVM classifier trained on a dataset $\mathcal{D}$. A privacy mechanism is applied to preserve the privacy of the data before the public release of the SVM classifier. Explanations need to suitably take into account the introduced uncertainty through the privacy mechanism.}
\label{fig:outline}
\end{figure}

In this work, we will propose robust counterfactual explanations that \emph{exploit the characteristics of the SVM classifier and the applied privacy mechanism}. The privacy mechanism proposed in \cite{Rubinstein2012} perturbs the SVM weights through additive Laplace noise. As a result, privacy is achieved by establishing uncertainty about the true classifier weights. For constructing explanations, we suitably model the uncertainty in the SVM weights through random variables. Then, we formulate counterfactual explanations as a optimization problem with probabilistic constraints \cite{Shapiro2014}, and characterize its deterministic equivalent problem. For linear SVMs, the deterministic problem is a second-order cone program (SOCP) and can be solved efficiently. For the non-linear SVM case, the problem is non-convex. We therefore propose an efficient sub-optimal algorithm to find robust explanations utilizing the existence of class specific prototypes. The use of prototypes in the context of counterfactual explanations has been previously applied in \cite{Looveren2019} for the purpose of ensuring that explanations lie within the data domain and can be computed efficiently.

Experimental results illustrate the trade-offs between accuracy, privacy, and explainability using the UCI Breast Cancer Wisconsin Dataset \cite{Dua2017}. In particular, we show that, contrary to non-robust explanations, the changes to an instance required by its robust explanation increase with the level of privacy. This degradation in the explanation quality occurs in order to guarantee, with prespecified confidence, that the explanations are counterfactuals, i.e., their classification is different than the instance, with respect to the non-private and unknown classifier. To the best of our knowledge, this is the first work to study the design of explanations for privacy preserving classifiers.

\section{Preliminaries}
In this section, we will describe the dataset and the SVM learning problem. Then, we will review the privacy preserving mechanism for SVM proposed in \cite{Rubinstein2012}.

\subsection{Dataset and SVM Classification}
Consider a dataset $\mathcal{D}$ consisting of a collection of $n$ tuples 
\begin{equation}
(\mat{x}_i,y_i), \quad i=1,\ldots,n,
\end{equation}
where each tuple $(\mat{x}_i,y_i)$ consists of a \emph{features vector} $\mat{x}_i \in \mathbb{R}^L$ and its associated \emph{class label} $y_i \in \{-1,1\}$. 

Dataset $\mathcal{D}$ is used to learn an SVM classifier \cite[Ch. 12]{Hastie2009} that can efficiently separate the two classes of data points through a separating hyperplane. The optimization problem for the SVM with hinge loss and parameter $C \geq 0$, is:
\begin{align}\label{eq:svm}
\minimize_{\mat{w}\in\mathbb{R}^F}  & ~~ \frac{1}{2}\snorm{\mat{w}} + \frac{C}{n}\sum_{i=1}^n \ebr{1-y_i \f(\mat{x}_i,\mat{w})}_+,
\end{align}
where the weights $\mat{w}$ geometrically correspond to the vector perpendicular to the separating hyperplane, $[a]_+ := \max\{0,a\}$, and $\f$ is the \emph{classifier} function:
\begin{equation}\label{eq:f}
\f(\mat{x},\mat{w}) := \phi(\mat{x})^\T\mat{w}.
\end{equation}
Here, the \emph{feature mapping} $\phi: \mathbb{R}^L \rightarrow \mathbb{R}^{F}$, $F \geq L$, enlarges the feature space of the data points to improve the separability of the two classes of data points through a hyperplane \cite[Ch. 12.3]{Hastie2009}. We assume in this work that $F$ is finite.

The minimization problem defined in Eq. \eqref{eq:svm} can be formulated as a quadratic program and solved efficiently. Let, $\mat{w}^*$ be the optimal solution to this problem, 
then the \emph{binary classification} of a given data point $\mat{x}$ is
\begin{equation}\label{eq:h}
\h(\mat{x},\mat{w}^*) := \mathrm{sign}[\f(\mat{x},\mat{w}^* )],
\end{equation}
where $\mathrm{sign}[a] = -1$, if $a < 0$ and $\mathrm{sign}[a] = 1$, if $a \geq 0$.

If the number of features $F$ is larger than the number of available data points $n$, then it is more efficient to solve the dual problem of Eq. \eqref{eq:svm}:
\begin{subequations}\label{eq:dual}
\begin{align}\label{eq:dual_obj}
    \maximize_{\mat{\alpha}\in\mathbb{R}^n} & ~~ \sum_i^n \alpha_i - \frac{1}{2} \sum_i^n \sum_j^n \alpha_i \alpha_j y_i y_j \phi(\mat{x}_i)^\T \phi(\mat{x}_j)\\
   s.t. & ~~ 0 \leq \alpha_i \leq {C}/{n}, \quad i = 1,\ldots,n,
\end{align}
\end{subequations}
since then the number of optimization variables is smaller. Another advantage in solving the dual is the possibility to apply the kernel trick \cite[Ch. 12.3]{Hastie2009}, where a kernel function $k(\mat{x}_i,\mat{x}_j)$ replaces $\phi(\mat{x}_i)^\T \phi(\mat{x}_j)$ in Eq. \eqref{eq:dual_obj}.

The optimal solution of the primal problem in Eq. \eqref{eq:svm} is then computed from the solution of the dual problem in Eq. \eqref{eq:dual} as:
\begin{align}
    \mat{w}^* = \sum_i^n \alpha_i^* y_i \phi(\mat{x}_i).
\end{align}

From Eq. \eqref{eq:f} it can be observed that in order to perform SVM classification, all we need is $\mat{w}^*$ and the feature mapping~$\phi$. In applications where the dataset includes sensitive information, the public release of the SVM classifier may lead to privacy breaches through the information in $\mat{w}^*$. Therefore, it is required to apply a privacy preserving mechanism before the public release of the classifier, as is illustrated in \figurename~\ref{fig:outline}.

\subsection{Privacy Preserving Mechanism for SVM}

In this section, we will describe the privacy preserving mechanism proposed in \cite[Sec. 3]{Rubinstein2012} for SVMs with finite dimensional feature mappings. The proposed mechanism that guarantees differential privacy essentially perturbs the SVM optimal weights $\mat{w}^* \in \mathbb{R}^F$ by additive Laplace noise.

More formally, let $M: \mathfrak{D} \rightarrow \mathcal{R} $ be a \emph{randomized mechanism}, where $\mathfrak{D}$ is the set of all datasets and $\mathcal{R}$ is the response set of the mechanism $M$ (defined as the solution space of the dual problem in Eq. \eqref{eq:dual}). Define \emph{neighboring datasets} as the datasets in $\mathfrak{D}$ that differ by one data point entry. Then, for a given $\beta>0$, a randomized mechanism $M$ provides {$\beta$-differential privacy} \cite[Def. 2.4]{Dwork2014} if for any two neighboring datasets $\mathcal{D}_1, \mathcal{D}_2 \in \mathfrak{D}$ and all response subsets $\mathcal{S} \subseteq \mathcal{R}$ it holds
\begin{equation}\label{def:privacy}
\prob{{M}(\mathcal{D}_1) \in \mathcal{S}} \leq \exp({\beta}) ~ \prob{{M}(\mathcal{D}_2) \in \mathcal{S}}.
\end{equation}

From \cite[Thm. 10]{Rubinstein2012}, the perturbed SVM weight vector
\begin{equation}\label{eq:privacy_mechanism}
    \tilde{\mat{w}} := \mat{w}^* + \mat{\mu},
\end{equation}
where $\mat{\mu}$ is a vector of iid Laplace random variables
\begin{equation}\label{eq:mu}
    \mu_i \sim \textrm{Lap}(0,\lambda), i = 1,\ldots,F,
\end{equation} 
achieves $\beta-$differential privacy for 
\begin{equation}
    \lambda \geq 4 C \kappa \sqrt{F}/ (\beta n),
\end{equation}
where $\kappa$ satisfies $\phi(\mat{x})^\T \phi(\mat{x}) \leq \kappa^2$ for all $\mat{x}\in\mathbb{R}^L$.

By perturbing the optimal weight vector, the accuracy of the SVM classifier will be degraded. For this purpose, it is important to deliver guarantees on the classification accuracy by upper bounding the noise scale $\lambda$. This is done in \cite{Rubinstein2012} by introducing a condition called $(\epsilon,\delta)$-useful mechanism:
\begin{equation}\label{eq:useful_mechanism}
    \prob{ \sup_{\mat{x}\in \mathcal{X}}\abs{\f(\mat{x},\tilde{\mat{w}} ) - \f(\mat{x},\mat{w}^* )} \leq \epsilon } \geq 1-\delta.
\end{equation}
where $\epsilon > 0, \delta\in(0,1)$, and the set $\mathcal{X} \subseteq \mathbb{R}^L$ contains all the data points. From \cite[Thm. 11]{Rubinstein2012}, the condition above is satisfied for $0 \leq \lambda \leq \frac{\epsilon}{2 \Phi (F - \ln \delta) }$, where $\abs{\phi_i(\mat{x})} \leq \Phi$ for all $\mat{x} \in \mathcal{X}$, $i=1,\ldots,F$, and some $\Phi > 0$.

In the following, we will assume that the following information is publicly available: the SVM weights $\tilde{\mat{w}}$, the data-independent details for constructing $\phi$, and the noise scale $\lambda$.

\section{Counterfactual Explanation}

\begin{figure}
    \centering
    \includegraphics[width=0.7\linewidth]{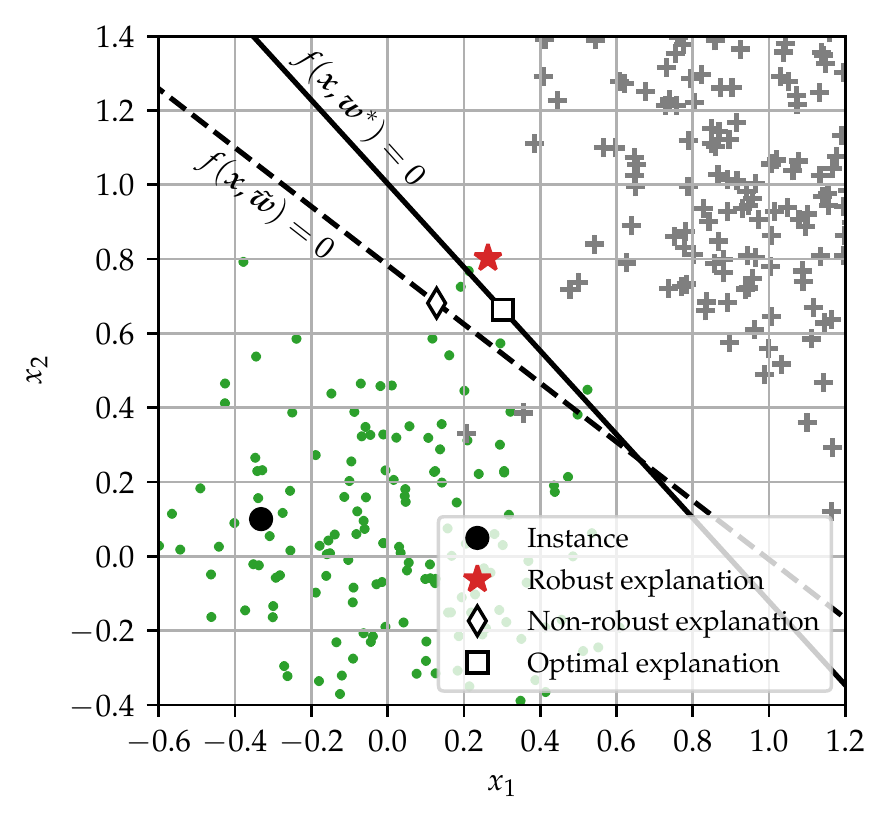}
    \caption{Illustration for linear classification with SVM and private SVM, and the different associated explanations using the Euclidean norm as distance measure. The data points are generated from two bivariate Guassian distributions with means $[0,0]$ and $[1,1]$, and same covariance $0.1 \mat{I}$.}
    \label{fig:linear}
\end{figure}

The concept of counterfactual explanations was proposed in \cite[Eq. 2]{Wachter2017} for general ML classifiers. In the following definition we use the original definition, but formulated for the SVM.

\begin{definition}[Counterfactual Explanation]
Given an SVM classifier with weight vector $\mat{w}$, a counterfactual explanation for the classification $y' = h(\mat{x}',\mat{w})$ of a given data instance $\mat{x}'$ is the solution of the following problem
\begin{subequations}\label{prob:explain}
\begin{align}
\minimize_{\mat{x} \in \mathbb{R}^L} & ~~  d(\mat{x},\mat{x}') \\ \label{eq:constraint_reformulation} s.t. & ~~ y' \f(\mat{x},\mat{w}) \leq 0,
\end{align}
\end{subequations}
where $d(\mat{x},\mat{x}')$ is a distance between $\mat{x}$ and $\mat{x}'$, assumed to be convex in the first argument, and $\f(\mat{x},\mat{w})$ is defined in Eq. \eqref{eq:f}.
\end{definition}

The solution of problem \eqref{prob:explain} is the closest point to $\mat{x}'$, according to the distance $d$, with the constraint that its class is different than $y'$. Notice that since we consider binary classification, the constraint in Eq. \eqref{eq:constraint_reformulation} is equivalent to $h(\mat{x},\mat{w}) \neq y'$. 

In \figurename~\ref{fig:linear}, we illustrate different counterfactual explanations for the linear SVM classifiers with optimal weights $\mat{w}^*$ and perturbed weights $\tilde{\mat{w}}$. The optimal and non-robust explanations are the closest points to the instance that lie on the respective decision boundaries. It can be seen that the non-robust explanation is closer to the instance compared to the optimal explanation and thus also has the same classification as the instance when using the optimal classifier. Hence, the non-robust explanation may not be credible. We will study robust explanations next that will take into account the uncertainty in the perturbed weights.

\subsection{Robust Explanations}

The private SVM mechanism releases noisy versions of the optimal $\mat{w}^*$ according to Eq. \eqref{eq:privacy_mechanism}. Thus, there exists uncertainty about the correctness of the classification with $\tilde{\mat{w}}$, which diminishes the effectiveness of the counterfactual explanation unless this uncertainty is taken into account. Therefore, we will model the uncertainty about $\mat{w}^*$ through the random vector $\mat{\xi} = \tilde{\mat{w}} - \mat{\mu}$. From Eq. \eqref{eq:mu}, it follows that
\begin{equation}\label{eq:xi}
\mat{\xi} \sim \fun{mvLap}\pp{\tilde{\mat{w}}, 2\lambda^2\mat{I}},
\end{equation}
where $\fun{mvLap}\pp{\mat{l},\mat{\Sigma}}$ is the multivariate Laplace distribution with location $\mat{l}$ and covariance matrix $\mat{\Sigma}$. Subsequently, we will replace the constraint in \eqref{eq:constraint_reformulation} with the probabilistic constraint
\begin{align}\label{eq:prob_constraint}
\prob{y' \f(\mat{x},\mat{\xi}) \leq 0} \geq p.
\end{align}
The probability $p$ in Eq. \eqref{eq:prob_constraint} is typically selected close to $1$. In the following, we will characterize the deterministic equivalent of the constraint in Eq. \eqref{eq:prob_constraint}.

\begin{proposition}\label{prop:chanceConst} The robust counterfactual explanation problem%
\begin{subequations}\label{prob:robust_explain}
\begin{align}
\minimize_{\mat{x} \in \mathbb{R}^L} & ~~  d(\mat{x},\mat{x}')\\ \label{prob:robust_explain_constraint}
s.t. & ~~ \prob{y' \f(\mat{x},\mat{\xi}) \leq 0} \geq p,
\end{align}
\end{subequations}
with $p\in [1/2,1]$ is equivalent to
\begin{subequations}\label{prob:robust_explain1}
\begin{align}
\minimize_{\mat{x} \in \mathbb{R}^L} & ~~ d(\mat{x}, \mat{x}') \\ \label{eq:prob_constraint_ref}
 s.t. & ~~ y'\phi(\mat{x})^\T \tilde{\mat{w}} - \lambda \sqrt{2} \ln(2(1-p)) \norm{\phi(\mat{x})} \leq 0.
\end{align}
\end{subequations}
\end{proposition}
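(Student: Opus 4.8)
The plan is to reduce the $F$-dimensional chance constraint to a statement about a single scalar random variable. Observe that $y'\f(\mat{x},\mat{\xi}) = y'\phi(\mat{x})^\T\mat{\xi}$ depends on the random vector $\mat{\xi}$ only through its one-dimensional projection onto the fixed direction $\phi(\mat{x})$. Hence the constraint in Eq.~\eqref{prob:robust_explain_constraint} is entirely governed by the law of the scalar $Z := y'\phi(\mat{x})^\T\mat{\xi}$, and the first step is to identify this law in closed form and then evaluate $\prob{Z \leq 0}$.

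For the second step I would invoke the closure of the multivariate Laplace distribution under affine maps: if $\mat{\xi}\sim\fun{mvLap}(\tilde{\mat{w}},2\lambda^2\mat{I})$, then any scalar projection $\mat{a}^\T\mat{\xi}$ is again a univariate Laplace variable. Its location is $\mat{a}^\T\tilde{\mat{w}}$, while its scale is read off from the dispersion matrix as $\sqrt{\mat{a}^\T(2\lambda^2\mat{I})\mat{a}}=\lambda\sqrt{2}\norm{\mat{a}}$. Taking $\mat{a}=\phi(\mat{x})$ and using the symmetry of the Laplace law to absorb the sign $y'\in\{-1,1\}$, this yields $Z\sim\fun{Lap}(m,b)$ with location $m=y'\phi(\mat{x})^\T\tilde{\mat{w}}$ and scale $b=\lambda\sqrt{2}\norm{\phi(\mat{x})}$.

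The third step is a direct computation with the Laplace cumulative distribution function, which is piecewise exponential with the split occurring at the location $m$. Evaluating at $0$ gives $\prob{Z\le 0}=\tfrac12\exp(-m/b)$ when $m\ge 0$ and $\prob{Z\le 0}=1-\tfrac12\exp(m/b)$ when $m\le 0$. The hypothesis $p\in[1/2,1]$ is precisely what selects the relevant branch: since the first expression never exceeds $\tfrac12$, the requirement $\prob{Z\le 0}\ge p\ge \tfrac12$ forces $m\le 0$ and places us on the second branch. Solving $1-\tfrac12\exp(m/b)\ge p$ for $m$ gives $m\le b\ln(2(1-p))$, and substituting $m$ and $b$ back reproduces exactly the deterministic inequality Eq.~\eqref{eq:prob_constraint_ref}. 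Since the objective $d(\mat{x},\mat{x}')$ is untouched, the two programs have identical feasible sets and hence the same solutions, establishing the claimed equivalence.

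I expect the only delicate point to be the second step: correctly extracting the scale parameter $\lambda\sqrt{2}\norm{\phi(\mat{x})}$ of the projected Laplace from the multivariate dispersion $2\lambda^2\mat{I}$ (this is where the factor $\sqrt{2}$ enters and where the precise parametrization of $\fun{mvLap}$ must be pinned down), together with the accompanying sanity check that $\ln(2(1-p))\le 0$ for $p\ge 1/2$, which guarantees that the branch condition $m\le 0$ is consistent with the final inequality. The case analysis itself and the logarithmic rearrangement are routine.
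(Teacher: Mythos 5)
Your proof is correct and takes essentially the same route as the paper: what you derive explicitly---reduction of the chance constraint to the scalar projection $y'\phi(\mat{x})^\T\mat{\xi}$, identification of its univariate Laplace law, and inversion of the Laplace CDF on the $p \ge 1/2$ branch---is precisely what the paper's proof outsources to citations (elliptical symmetry of the multivariate Laplace plus Henrion's Lemma 2.2 for the structure of the deterministic equivalent, and Peng's Example 2.2 for the Laplace-specific quantile computation). The delicate point you flag is real but is inherited from the paper itself rather than a defect of your argument: reading the projected scale as $\sqrt{\phi(\mat{x})^\T(2\lambda^2\mat{I})\phi(\mat{x})} = \lambda\sqrt{2}\norm{\phi(\mat{x})}$ treats $2\lambda^2\mat{I}$ as a dispersion matrix, whereas if it is literally the covariance matrix (as the paper declares in defining $\fun{mvLap}$) the Laplace scale of the projection would be $\lambda\norm{\phi(\mat{x})}$ (for a Laplace law, scale equals standard deviation divided by $\sqrt{2}$), so the $\sqrt{2}$ appearing in Eq.~\eqref{eq:prob_constraint_ref} hinges on exactly the parametrization convention you identified---and your derivation matches the convention the paper uses.
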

\begin{proof} We need to prove that the probabilistic constraint in Eq. \eqref{prob:robust_explain_constraint} can be reformulated to Eq. \eqref{eq:prob_constraint_ref}. From \eqref{eq:xi}, the multivariate Laplace distribution $\fun{mvLap}\pp{\tilde{\mat{w}}, 2\lambda^2\mat{I}}$ is \emph{symmetric} since the variance does not depend on the mean. A symmetric multivariate Laplace distribution is \emph{elliptically symmetric} \cite{Kotz2001}. Consequently, the structure of Eq. \eqref{eq:prob_constraint_ref} follows from \cite[Lemma 2.2]{Henrion2007}, and for the multivariate Laplace distribution, the derivation follows similar steps as in \cite[Ex. 2.2]{Peng2019}.
\end{proof}

The constraint in Eq. \eqref{eq:prob_constraint_ref} includes two terms. The first term is the same as in Eq. \eqref{eq:constraint_reformulation} and requires that the solution of the problem has a different class than $y'$. The second term establishes robustness by enforcing stronger confidence in the SVM prediction, i.e., larger $\abs{\phi(\mat{x})^\T \tilde{\mat{w}}}$. Notice that for $p=0.5$, the second term is zero and Eq. \eqref{eq:prob_constraint_ref} becomes the same as that in the non-robust case in Eq. \eqref{eq:constraint_reformulation}. 

For linear SVM, i.e., $\phi(\mat{x}) = \mat{x}$, problem \eqref{prob:robust_explain1} is rewritten as 
\begin{align}
\minimize_{\mat{x} \in \mathbb{R}^L} ~~ d(\mat{x}, \mat{x}') \quad s.t.  ~~ \norm{\mat{x}} \leq \frac{y'}{\lambda \sqrt{2} \ln(2(1-p))}\mat{x}^\T \tilde{\mat{w}},
\end{align}
which is a SOCP problem and can be solved efficiently using convex optimization solvers. The implementations for this letter have been done using CVXPY \cite{diamond2016cvxpy, agrawal2018rewriting}, and the explanations plotted in \figurename~\ref{fig:linear} were found by solving this problem. 

The general problem in \eqref{prob:robust_explain1} is however not convex. Therefore, we will consider next finding a suboptimal solution that can be computed efficiently. Define the function
\begin{equation}
g(\mat{x}):= y' \phi(\mat{x})^\T \tilde{\mat{w}} - \lambda \sqrt{2} \ln(2(1-p)) \norm{\phi(\mat{x})},
\end{equation}
which is the left hand side of Eq. \eqref{eq:prob_constraint_ref}. A \emph{root} for the function $g$ would qualify as a robust explanation since it satisfies the constraint in \eqref{eq:prob_constraint_ref} with equality. In order to find a root for $g$, we will use the \emph{bisection method}. As a prerequisite, this method requires the knowledge of two input values to the function that give opposite signs. Clearly, for the given data instance $\mat{x}'$, $g(\mat{x}')$ is positive. The second required input vector should necessarily be of opposite class to $\mat{x}'$ in order for $g$ to be negative. We will discuss next the availability of such input that we will here refer to as a \emph{prototype} \cite{Looveren2019}.

Unlike in \cite{Looveren2019}, we do not have access to test data to construct these prototypes due to privacy issues. However, we argue that if we consider prototypes as representatives of their classes, the ``domain expert'' that provides the explanations should be able to estimate these by knowing the characteristics of the data for each class. If this is not the case, we assume that the prototypes can be constructed by generating random data instances and studying their classification. Let the prototypes for class $1$ and $-1$ be $\mat{z}_1$ and $\mat{z}_{-1}$, respectively. In the process of finding the prototypes, it is desired that the classification for these points has sufficient confidence, i.e.,
\begin{equation}\label{eq:rob_proto}
    \abs{\f(\mat{z}_y,\tilde{\mat{w}})} \geq -\lambda \sqrt{2} \ln(2(1-p)) \norm{\phi(\mat{z}_y)},\quad y \in \{1,-1\}.
\end{equation}

The steps for the bisection method are described in Algorithm~\ref{alg:bisection}. The lower and upper bounds for bisection are initialized according to the given data instance and the prototype from the opposite class, respectively. In each iteration we check the classification of the midpoint of the interval between the upper and lower bounds. If this class is the same as the lower bound, then we replace the lower bound by the midpoint. Otherwise, we replace the upper bound. These steps are performed until the distance between the upper and lower bounds is lower than the threshold $\epsilon$. The algorithm has linear convergence since the distance between the bounds is halved in each iteration.

\begin{algorithm}[t!]
   \caption{Bisection method for finding an explanation}
   \label{alg:bisection}
\begin{algorithmic}[1]
\STATE {\bfseries Input:} Instance and classification $(\mat{x}', y')$, prototype $\mat{z}_{-y'}$.
\STATE {\bfseries Initialize:}  ${\mat{x}^{ub}} = \mat{z}_{-y'}, {\mat{x}^{lb}} = \mat{x}'$
\WHILE{$\norm{\mat{x}^{ub} - \mat{x}^{lb}} > \epsilon$}
    \STATE $\mat{x} \leftarrow (\mat{x}^{ub} + {\mat{x}^{lb}})/2$
    \IF{$g(\mat{x})<0$}
        \STATE $\mat{x}^{ub} \leftarrow \mat{x}$
    \ELSE
        \STATE $\mat{x}^{lb} \leftarrow \mat{x}$
    \ENDIF
\ENDWHILE
    \STATE {\bfseries Output:} $\mat{x}^{ro-ex} \leftarrow \mat{x}$.
\end{algorithmic}
\end{algorithm}

\section{Experimental Results }
We illustrate our approach by using the publicly available UCI Breast Cancer Wisconsin (Diagnostic) dataset \cite{Dua2017}. The dataset includes $569$ instances, each with $30$ features and the binary diagnosis: benign (class $-1$) or malignant (class $1$). The code to reproduce all the figures is available at \cite{Mochaourab2021}.

We randomly split the dataset once into a training (70\% of total) and a test set (30\% of total). The results were qualitatively similar for different random splits of the dataset with the same splitting ratio. Moreover, we normalize the training data to have zero mean and unit variance, and the calculated normalization parameters are applied to the test data. Next, a feature mapping $\phi$ is generated using the Radial Basis Function (RBF) kernel approximation in \cite{Rahimi2007} with dimensions $F=100$.\footnote{Note, that in this work we have assumed finite dimensional feature mappings and hence we do not explicitly consider the approximation error in the feature mapping in relation to using the RBF kernel as is done in \cite[Section 4]{Rubinstein2012} for the general case of translation-invariant kernels.} For the implementation of the feature mapping, we have used the library in \cite{Atarashi}. The SVM classifiers learned for the plots are trained using the training set and their performance  measured on the test set. The distance function used for the counterfactuals in Eq. \eqref{prob:robust_explain1}, is the Eucleadian norm, i.e., $d(\mat{x},\mat{x}') = \norm{\mat{x} - \mat{x}'}$. The prototypes are selected to be the data mean of each class.

\smallskip
\noindent
\textbf{Accuracy vs privacy.}
\figurename~\ref{fig:pri_acc_tradeoff}, depicts the trade-off between average accuracy and privacy of the private SVM, with an average taken over $10^4$ random realizations of Laplace noise. The dashed line corresponds to the non-private case in which the SVM weights are not perturbed with noise. The average accuracy for the private SVM is lowest ($\approx 0.5$) for high privacy levels (very small $\beta$), and monotonically increases with $\beta$ to eventually converge to the non-private SVM average performance.

\begin{figure}[!t]
    \centering
    \includegraphics[width=.65\linewidth]{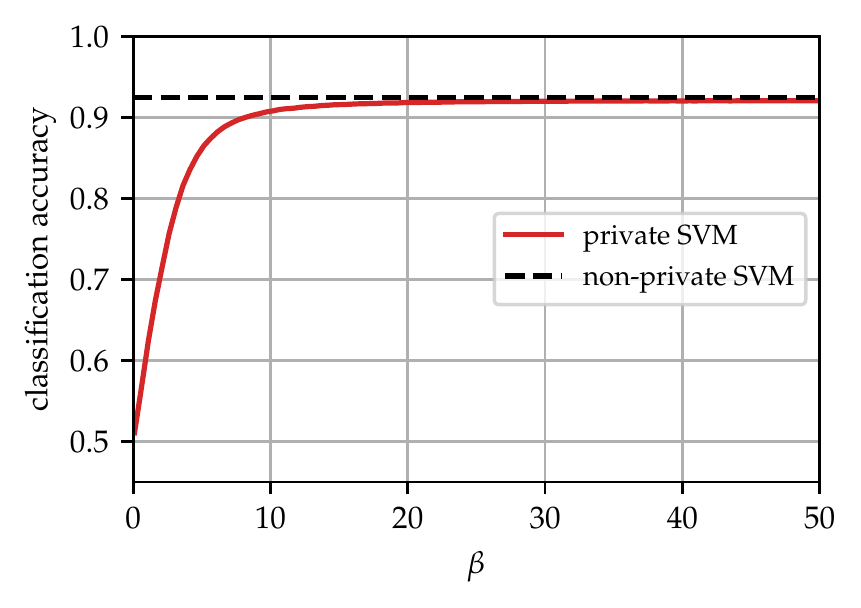}
    \caption{Average accuracy of private SVM for different values of $\beta$ in \eqref{def:privacy}. The confidence value is set to $p=0.9$ for the probabilistic constraint in \eqref{eq:prob_constraint}.}
    \label{fig:pri_acc_tradeoff}
\end{figure}%

\smallskip
\noindent
\textbf{Convergence and explainability.}
We set $\beta = 5, p=0.9$. Then, we select a random instance $\mat{x}'$ from the test set with label $y' = 1$ (malignant), and apply Algorithm \ref{alg:bisection} to calculate an explanation $\mat{x}^{ro-ex}$ for its classification. \figurename~\ref{fig:convergence} shows the low number of iterations needed for Algorithm \ref{alg:bisection} to converge. The found explanation $\mat{x}^{ro-ex}$ quantifies the changes to each feature of $\mat{x}'$ in order to change the classifier prediction. \figurename~\ref{fig:explanation} shows these changes normalized over the instance's feature values. For example, for the selected instance, the explanation shows that feature number $19$ needs to be increased by around half its value, while several other feature values need to be halved in order to alter the prediction from malignant to benign.

\smallskip
\noindent
\textbf{Explainability vs privacy.}
The average distance between the counterfactual explanation and the instance is calculated depending on $\beta$ (for $p=0.9$) in \figurename~\ref{fig:dist_inst}, and depending on $p$ (for $\beta = 0.5$) in \figurename~\ref{fig:dist_inst_p}. This average distance for robust counterfactual explanations is high for small values of $\beta$, as is shown in \figurename~\ref{fig:dist_inst}. This is due to the large uncertainty through the large noise variance. The non-robust explanation has similar distances as for non-private SVM since the noise has zero mean. In \figurename~\ref{fig:dist_inst_p}, the effects of $p$ on the average distance are shown. For $p=0.5$, robust explanation is identical with non-robust explanation as mentioned before. For large confidence values $p$, the robust explanation converges to the prototype data point, and is furthest away from the instance.

Clearly, it is desirable to find counterfactual explanations that are as close as possible to the instance to explain. Still, as we observe in \figurename~\ref{fig:dist_inst_all}, robust explanations are further away compared to the non-robust explanations, showing that privacy degrades the \emph{quality} of explanations. The reason for that is, non-robust explanations violate the constraint in Eq. \eqref{eq:constraint_reformulation} with probability $0.5$, while the robust explanations violate this constraint with probability $p$ (which we here set to $0.9$). This constraint violation is further studied in \figurename~\ref{fig:dist_all}. In \figurename~\ref{fig:dist_non_robust} and \figurename~\ref{fig:dist_robust}, the summary statistics for the left hand side of \eqref{eq:constraint_reformulation} are plotted for the non-robust and robust explanations, respectively. These plots highlight the importance for considering robust explanations. Notice that the flattening of the $50$-th percentile curve (\figurename~\ref{fig:dist_robust}) for $\beta$ less than around $4$ is due to the convergence of the explanation to the prototype.

\begin{figure}[!t]
    \centering
    \begin{subfigure}[b]{0.65\linewidth}
    \includegraphics[width=\linewidth]{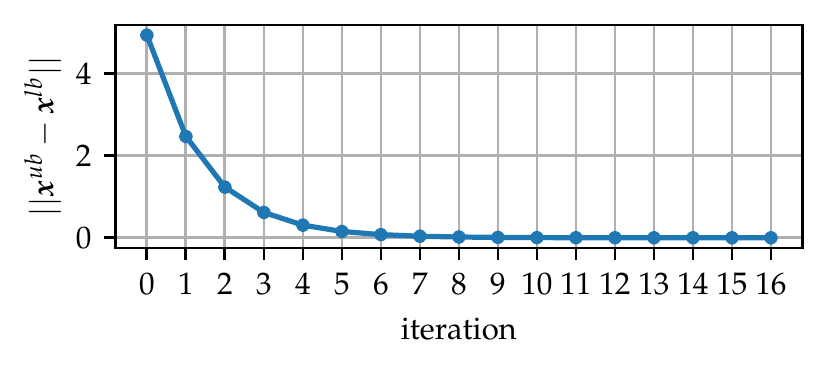}
    \caption{Convergence of Algorithm \ref{alg:bisection}.}
    \label{fig:convergence}
    \end{subfigure}
    \hfill
    \begin{subfigure}[b]{0.65\linewidth}
    \centering
    \includegraphics[width=\linewidth]{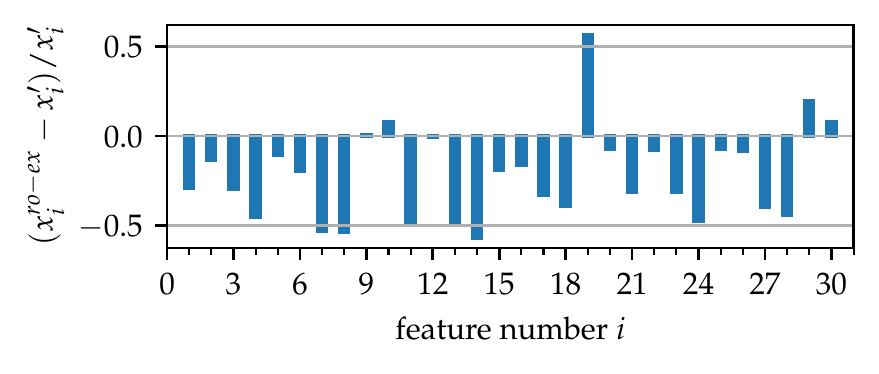}
    \caption{The counterfactual explanation is utilized to quantify the necessary changes in the instance's features in order to alter its SVM classifier prediction.}
    \label{fig:explanation}
    \end{subfigure}
    \caption{Convergence of Algorithm \ref{alg:bisection} and the explanation of an instance.}
    \label{fig:convergence_to_explanation}
\end{figure}

\section{Conclusions}

The above findings highlight the difficulties associated with embedding the social and ethical values mandated by regulatory instruments into ML algorithms. An ensuing conclusion is that a conscious decision may be required to promote one social value at the expense of another, the context in which the technology is being operated potentially being a deciding factor. These issues are highlighted in this work through the study of robust counterfactual explanations for privacy preserving SVMs. After suitably modelling the problem, we have studies its solution for linear and non-linear SVMSs. While the problem can be solved optimally for the linear case, we have proposed an efficient suboptimal solution based on the bisection method for the general case. The advantages for using robust explanations are illustrated on the UCI Breast Cancer Wisconsin dataset. In particular, our robustness model guarantees a tuneable level of confidence that the counterfactuals are of opposite class to that of the instance we want to explain.

\section{Acknowledgments}
This work has been supported by KTH Digital Futures within the project ``EXTREMUM: Explainable and Ethical Machine Learning for Knowledge Discovery from Medical Data Sources'' (\texttt{https://www.digitalfutures.kth.se}).

\begin{figure*}[!ht]
    \centering
    \begin{subfigure}[b]{0.65\textwidth}
         \centering
        \includegraphics[width=\textwidth]{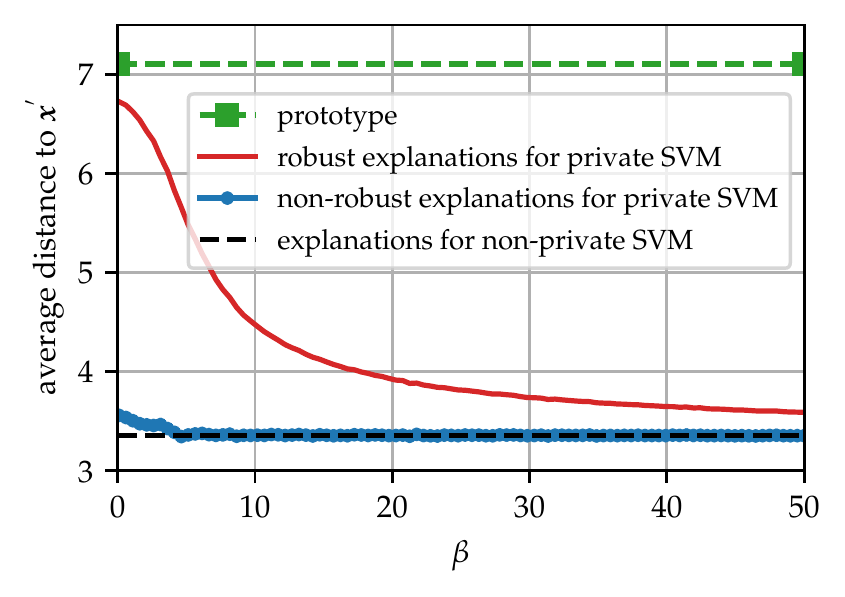}
        \caption{$p=0.9$}
        \label{fig:dist_inst}
    \end{subfigure}
    \hfill
    \begin{subfigure}[b]{0.65\textwidth}
         \centering
        \includegraphics[width=\textwidth]{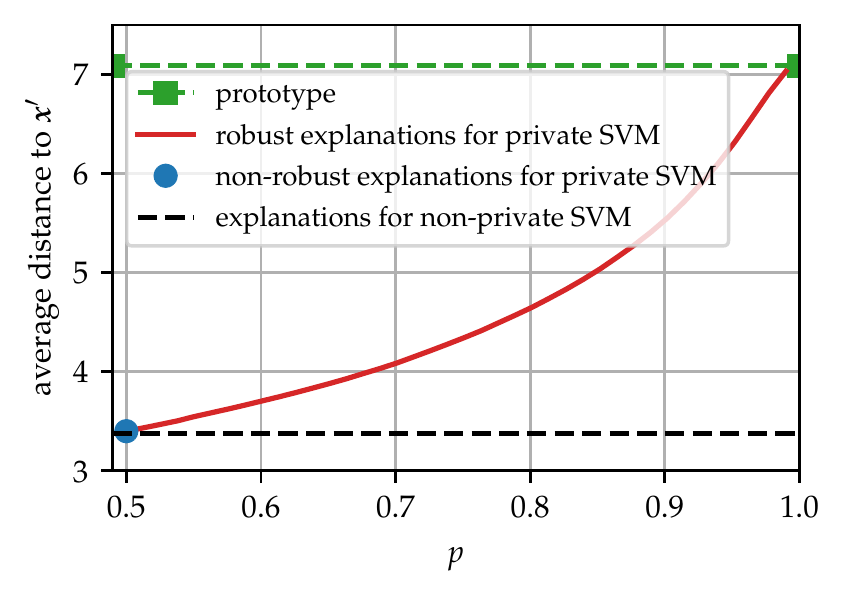}
        \caption{$\beta=5$}
        \label{fig:dist_inst_p}
    \end{subfigure}
    \caption{Robust explanations lie between the instance $\mat{x}'$ and the selected prototype. This distance decreases for larger $\beta$ and increases for larger probability $p$.}
    \label{fig:dist_inst_all}
    \vspace{-2mm}
\end{figure*}
\begin{figure*}[t]
    \centering
    \begin{subfigure}[b]{0.65\textwidth}
        \centering
        \includegraphics[width=\textwidth]{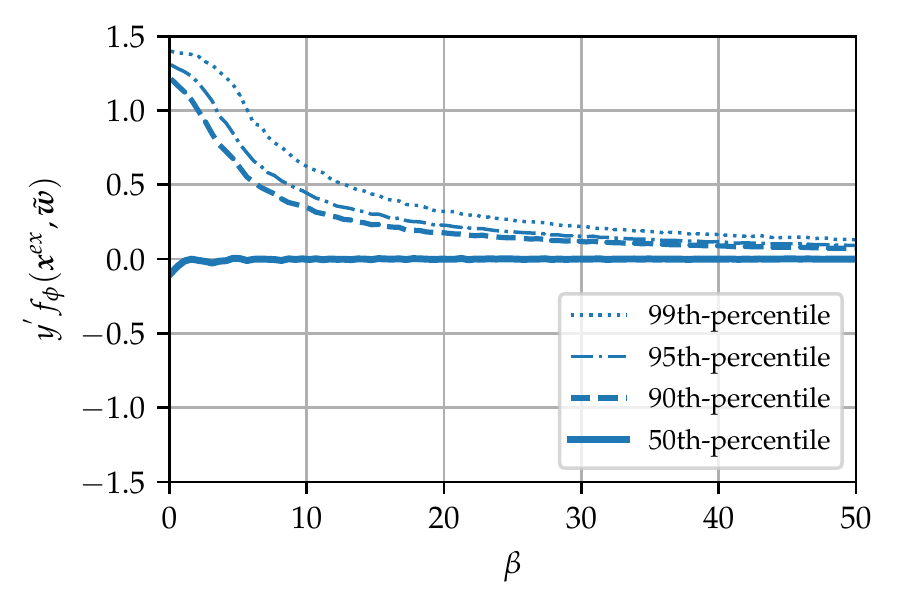}
        \caption{Non-robust explanations}
        \label{fig:dist_non_robust}
    \end{subfigure}
    \hfill
    \begin{subfigure}[b]{0.65\textwidth}
        \centering
        \includegraphics[width=\textwidth]{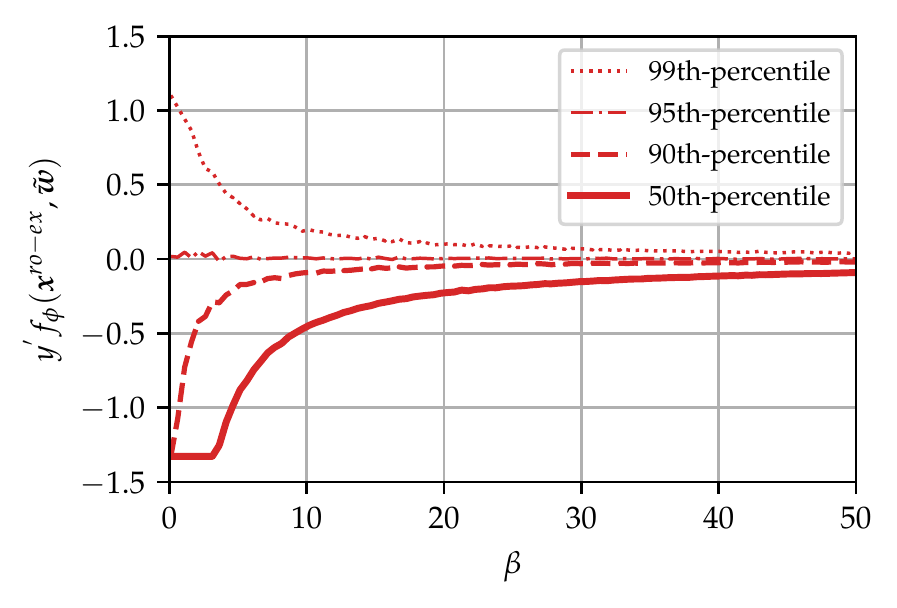}
        \caption{Robust explanations}
        \label{fig:dist_robust}
    \end{subfigure}
    \caption{Comparison between robust and non-robust explanations on the violation of the constraint in \eqref{eq:constraint_reformulation} for private SVM.}
    \label{fig:dist_all}
\vspace{-.775mm}
\end{figure*}

\bibliographystyle{elsarticle-num}
\bibliography{references}

\end{document}